\title{Uncertainty propagation through trained multi-layer perceptrons: Exact analytical results}
\author{Andrew Thompson and Miles McCrory\footnote{National Physical Laboratory, Hampton Road, Teddington, TW11 0LW, UK;\\ \indent\;\; andrew.thompson@npl.co.uk; miles.mccrory@npl.co.uk.}}
\date{January 2026}
\newcommand{\RR}{\mathbb{R}}
\newcommand{\EE}{\mathbb{E}}
\newcommand{\ra}{\rightarrow}
\begin{document}

\maketitle

\abstract{We give analytical results for propagation of uncertainty through trained multi-layer perceptrons (MLPs) with a single hidden layer and ReLU activation functions. More precisely, we give expressions for the mean and variance of the output when the input is multivariate Gaussian. In contrast to previous results, we obtain exact expressions without resort to a series expansion.}

\section{Introduction}\label{sec:intro}

The importance of assessing the trustworthiness of machine learning (ML) models through uncertainty quantification is now widely recognised -- for example the EU AI Act indirectly mandates uncertainty quantification by requiring transparency and risk management~\cite{deuschel2024role}. Here we focus on the challenge of \emph{uncertainty propagation through trained machine learning regression models}, in which known uncertainties concerning the input data are propagated through a fixed (trained) model to obtain uncertainties for the output prediction. Despite not taking into account contributions to the prediction uncertainty due to the training process, this approach is of interest for two reasons. Firstly, in some testing/validation scenarios the tester may only have access to the ML model as a `black box', in which case uncertainty propagation through a fixed model is the only feasible approach that can be taken. Secondly, uncertainty propagation through a fixed model can be used to carry out sensitivity analysis of the model~\cite{borgonovo2016sensitivity}.

There are two main types of method for propagating uncertainties through models: analytical and sampling-based. In the analytical approach, a mathematical expression is derived either for the distribution of the output or some summary statistic such as variance. In the sampling-based approach, the model is evaluated on samples of the input data distribution, for example Monte Carlo sampling~\cite{bipm2008supplement}, thereby obtaining an empirical distribution.

If it is possible to derive analytical expressions characterising the output uncertainty, there are several reasons why it is usually preferable to use them. Firstly, analytical expressions are often more accurate, since they often precisely characterise the output uncertainty, in contrast to sampling-based approaches which only approximate the true distribution of the model output with accuracy increasing with the number of samples. Secondly, analytical expressions provide more transparency by providing mathematical insight and understanding into the sources of propagated uncertainty. Thirdly, analytical approaches are more easily reproducible, since all that is needed to reproduce the evaluation is the input data distribution and the analytical expression for its propagation, whereas sampling-based methods often require a record of the sampling process. 

In this paper, we focus on analytical characterisation of output uncertainty for fully-connected feed-forward regression neural networks, also known as multi-layer perceptrons (MLPs), with a single hidden layer and with the popular choice of a ReLU activation function. The usefulness of single-hidden layer networks is underlined by theoretical results showing that they are universal approximators if sufficiently wide~\cite{hornik1989multilayer}, and that they converge to Gaussian Processes in the infinite-width limit if the weights are initialised in a particular way~\cite{neal1996priors}. More precisely, we derive analytical expressions for the mean and variance of model predictions under the assumption that the input data follows a multivariate Gaussian distribution. 

We demonstrate the practicality of our methods by illustrating their use upon a test problem involving the prediction of the state-of-health of lithium-ion cells based upon Electrical Impedance Spectroscopy (EIS) data. By comparing our methods with a Monte Carlo sampling approach~\cite{bipm2008supplement} in this context, we validate the correctness of our expressions.

\subsection{Paper structure}

The structure of the paper is as follows. We first review closely related work in Section~\ref{related}. In Section~\ref{architecture}, we define the single-hidden-layer MLP model that we are considering, and then we present our analytical results for uncertainty propagation in Section~\ref{analytical_results}. Proofs of all novel results can be found in the Appendix. We present our numerical experiments in Section~\ref{experiments}, before giving our conclusions in Section~\ref{discussion}.

\section{Related work}\label{related}

The closest related existing work is~\cite{wright2024analytic}, in which analytical expressions are given for the pairwise covariances after passing multivariate Gaussian inputs through general activation functions. The results apply to any elementwise independent and identical activation function whose mean function when applied to univariate Gaussian input is infinitely differentiable with respect to the mean of its input. This includes the ReLU activation function, and also others such as Heaviside and GELU. The expressions for the covariances take the form of infinite Taylor series, which the authors point out can be computed to arbitrary precision. However, the number of terms required to compute to a desired precision is dependent upon the covariances of the inputs, and each of the terms in the series have to be painstakingly calculated for a specific activation function. Our results, on the other hand, do not involve infinite series but are closed-form functions of standard (1D and 2D) Gaussian integrals. Our results therefore provide a simpler method for computing the output moments to arbitrary precision for the case of a ReLU activation function.

To the authors' best knowledge, all other analytical methods that have been proposed for propagating uncertainty through neural networks are not exact but involve approximation at some point. For neural networks with more than one hidden layer, the simplifying assumption is often made that the inputs to each activation step are multivariate Gaussian, which is an approximation except for the first hidden layer. Even for propagation through a single hidden layer, approximations are often employed, for example assuming the inputs are mean zero~\cite{bibi2018analytic}, using asymptotic approximations~\cite{wu2018deterministic} and linearising the activation function~\cite{diamzon2025uncertainty}. 

Finally, we note that various sampling-based methods have been proposed to approximately propagate uncertainty through neural networks, including Monte Carlo methods~\cite{monchot2023input}, Kalman filtering~\cite{titensky2018uncertainty} and the Unscented Transform~\cite{monchot2023input}.

\section{Neural network architecture}\label{architecture}

The neural network architecture that we consider in this paper is shown in Figure~\ref{fig:perceptron}. We consider an MLP with a single hidden layer and a ReLU activation function. A vector of inputs $V\in\RR^m$ is passed through an affine convolution to give $W = A^T V + c \in \mathbb{R}^p$, where $p$ is the number of hidden units. $W$ is then passed through a componentwise ReLU to give $X=W_+\in\RR^p$, where the ReLU function is defined in the usual way as $\mathrm{ReLU}(x):=\max(x,0)$. Finally, $X$ is passed through an affine convolution to obtain the output $Y = \beta^T X + d \in \mathbb{R}$.

\begin{figure}[h!]
	\centering
    \begin{tikzpicture}[shorten >=1pt]
        \tikzstyle{unit}=[draw,shape=circle,minimum size=1.15cm]
 
        \node[unit](v1) at (0,3.5){$v_1$};
        \node[unit](v2) at (0,2){$v_2$};
        \node(dots) at (0,1.125){\vdots};
        \node[unit](vm) at (0,0){$v_m$};

        \node[unit](w1) at (4,3.5){$w_1$};
        \node[unit](w2) at (4,2){$w_2$};
        \node(dots) at (4,1.125){\vdots};
        \node[unit](wp) at (4,0){$w_p$};
 
        \node[unit](x1) at (8,3.5){$x_1$};
        \node[unit](x2) at (8,2){$x_2$};
        \node(dots) at (8,1.125){\vdots};
        \node[unit](xp) at (8,0){$x_p$};

        \node[unit](y1) at (12,2){$y_1$};

        \draw[->] (v1) -- (w1);
        \draw[->] (v1) -- (w2);
        \draw[->] (v1) -- (wp);
 
        \draw[->] (v2) -- (w1);
        \draw[->] (v2) -- (w2);
        \draw[->] (v2) -- (wp);
 
        \draw[->] (vm) -- (w1);
        \draw[->] (vm) -- (w2);
        \draw[->] (vm) -- (wp);

        \draw[->] (w1) -- (x1);

        \draw[->] (w2) -- (x2);
 
        \draw[->] (wp) -- (xp);

        \draw[->] (x1) -- (y1);
        \draw[->] (x2) -- (y1);
        \draw[->] (xp) -- (y1);

        \draw [decorate,decoration={brace,amplitude=10pt},xshift=-4pt,yshift=0pt](-0.5,4) -- (0.75,4) node [black,midway,yshift=+0.6cm]{$V \in \mathbb{R}^m$};
        \draw [decorate,decoration={brace,amplitude=10pt},xshift=-4pt,yshift=0pt](3.5,4) -- (4.75,4) node [black,midway,yshift=+0.6cm]{$W = A^TV + c \in \mathbb{R}^p$};
        \draw [decorate,decoration={brace,amplitude=10pt},xshift=-4pt,yshift=0pt](7.5,4) -- (8.75,4) node [black,midway,yshift=+0.6cm]{$X = W_{+} \in \mathbb{R}^p$};
        \draw [decorate,decoration={brace,amplitude=10pt},xshift=-4pt,yshift=0pt](11.5,2.5) -- (12.75,2.5) node [black,midway,yshift=+0.6cm]{$Y = \beta^T X + d \in \mathbb{R}$};
    \end{tikzpicture}
    \caption{Architecture of an MLP with a single hidden layer and ReLU activation function.}
    \label{fig:perceptron}
\end{figure}
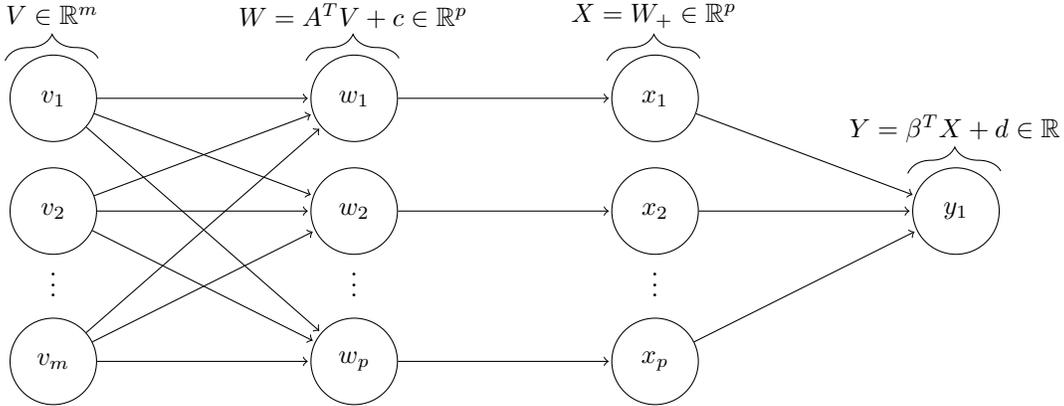

\section{Analytical uncertainty propagation results}\label{analytical_results}

In this section we present our analytical expressions for the mean and variance of the output of the neural network defined in Section~\ref{architecture} when the input data follows a multivariate Gaussian distribution. Suppose then that the input data $V\in\RR^m$ is multivariate Gaussian with mean $\lambda$ and covariance $\Lambda$.

We will see that the main challenge is propagating uncertainty through the ReLU activation function as propagating uncertainty through the convolutional layers is relatively straightforward. For the first convolution, it is a standard result that $W$ is also multivariate Gaussian with mean $\mu:=A^T\lambda+c$ and covariance $\Sigma:=A^T\Lambda A$~\cite[Result 13.2]{johnson2002applied}. For the second convolution, suppose that the mean and covariance of $X=W_+$ are known; let us write $\gamma$ and $\Gamma$ for the mean and covariance of $X$ respectively. Then the mean of $Y$ is $\beta^T \gamma + d$ and the variance of $Y$ is $\beta^T \Gamma \beta$~\cite[Equation 2-43]{johnson2002applied}. 

It remains, therefore, to obtain expressions for $\gamma$ and $\Gamma$, which are the mean and covariance of a \emph{rectified} multivariate Gaussian distribution~\cite{kan2017moments}. We have
$$\begin{array}{rcl}\gamma_i &=& \mathbb{E} X_i,\\ \Gamma_{ii} &=& \mathbb{E} X_i^2 - (\mathbb{E} X_i)^2 = \mathbb{E} X_i^2 - \gamma_i^2,\\
\Gamma_{ij} &=& \mathbb{E} X_i X_j - \mathbb{E} X_i \mathbb{E} X_j = \mathbb{E} X_i X_j - \gamma_i \gamma_j\;\textrm{for}\;i\neq j,\end{array}$$ and so it suffices to obtain expressions for $\mathbb{E} X_i$, $\mathbb{E} X_i^2$ and $\mathbb{E} X_i X_j$ where $\begin{bmatrix}X_i&X_j\end{bmatrix}^T$ follows a rectified bivariate Gaussian distribution.

Define $\phi(x)$ to be the standard normal probability density function (pdf), define $\Phi(x)$ to be the standard normal cumulative distribution function (cdf), and write $\sigma^2_i=\Sigma_{ii}$ for $i=1,\ldots,p$. We have the following results for $\mathbb{E} X_i$ and $\mathbb{E} X_i^2$ respectively.

\newtheorem{theorem}{Theorem}
\begin{theorem}[\textbf{\cite[Equation (C.9)]{frey1999variational}}]
    Let $X$ be defined as above and suppose $\sigma_i>0$. Then, for all $i=1,\ldots,p$, 
    \begin{equation}\label{EXi_eqn}\mathbb{E} X_i = \sigma_i \phi \left( \frac{\mu_i}{\sigma_i} \right)+\mu_i \Phi \left( \frac{\mu_i}{\sigma_i} \right)\end{equation}
    and
    \begin{equation}\label{EXi2_eqn}\mathbb{E}X_i^2 =\mu_i \sigma_i \phi \left( \frac{\mu_i}{\sigma_i} \right)+(\mu_i^2+\sigma_i^2) \Phi \left( \frac{\mu_i}{\sigma_i} \right).\end{equation}
\label{thrm:xi}
\end{theorem}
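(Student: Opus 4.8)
The plan is to reduce both quantities to one-dimensional Gaussian integrals over a half-line and evaluate them with two elementary calculus facts. Since $\mathrm{ReLU}$ annihilates negative arguments and $W_i\sim\mathcal{N}(\mu_i,\sigma_i^2)$ with $\sigma_i>0$, we have
\[
\mathbb{E}X_i=\mathbb{E}\,\mathrm{ReLU}(W_i)=\int_0^{\infty} w\,\frac{1}{\sigma_i}\phi\!\left(\frac{w-\mu_i}{\sigma_i}\right)\mathrm{d}w,
\qquad
\mathbb{E}X_i^2=\int_0^{\infty} w^2\,\frac{1}{\sigma_i}\phi\!\left(\frac{w-\mu_i}{\sigma_i}\right)\mathrm{d}w .
\]
First I would apply the substitution $z=(w-\mu_i)/\sigma_i$, which is legitimate precisely because $\sigma_i>0$; it sends the lower limit $w=0$ to $z=-\mu_i/\sigma_i$ and gives $\mathbb{E}X_i=\int_{-\mu_i/\sigma_i}^{\infty}(\sigma_i z+\mu_i)\phi(z)\,\mathrm{d}z$ and $\mathbb{E}X_i^2=\int_{-\mu_i/\sigma_i}^{\infty}(\sigma_i z+\mu_i)^2\phi(z)\,\mathrm{d}z$.

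Next I would expand the integrands into monomials in $z$ and evaluate the three resulting integrals using: (i) the identity $\phi'(z)=-z\phi(z)$, so that $z\phi(z)$ has antiderivative $-\phi(z)$; and (ii) integration by parts with $u=z$, $\mathrm{d}v=z\phi(z)\,\mathrm{d}z$ to deal with the $z^2$ term, which yields antiderivative $-z\phi(z)+\Phi(z)$. Evaluating between $-\mu_i/\sigma_i$ and $+\infty$, the boundary contributions at $+\infty$ vanish because $\phi(z)\to 0$ and $z\phi(z)\to 0$, and at the lower limit one uses the symmetry relations $\phi(-t)=\phi(t)$ and $1-\Phi(-t)=\Phi(t)$ with $t=\mu_i/\sigma_i$. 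This produces $\int_{-\mu_i/\sigma_i}^{\infty}\phi(z)\,\mathrm{d}z=\Phi(\mu_i/\sigma_i)$, $\int_{-\mu_i/\sigma_i}^{\infty}z\phi(z)\,\mathrm{d}z=\phi(\mu_i/\sigma_i)$, and $\int_{-\mu_i/\sigma_i}^{\infty}z^2\phi(z)\,\mathrm{d}z=\Phi(\mu_i/\sigma_i)-(\mu_i/\sigma_i)\phi(\mu_i/\sigma_i)$.

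Finally I would substitute these three values back. For $\mathbb{E}X_i$ this is immediate and gives \eqref{EXi_eqn}. For $\mathbb{E}X_i^2$, expanding $(\sigma_i z+\mu_i)^2=\sigma_i^2 z^2+2\sigma_i\mu_i z+\mu_i^2$ and collecting terms, the $-\sigma_i\mu_i\phi(\mu_i/\sigma_i)$ coming from the $z^2$ integral combines with the $2\sigma_i\mu_i\phi(\mu_i/\sigma_i)$ from the linear term to leave $\mu_i\sigma_i\phi(\mu_i/\sigma_i)+(\mu_i^2+\sigma_i^2)\Phi(\mu_i/\sigma_i)$, which is \eqref{EXi2_eqn}. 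I do not expect a genuine obstacle here: this is a routine half-line Gaussian moment computation, and the only points needing care are tracking the change of the lower integration limit under the substitution and confirming that the boundary terms at infinity vanish. The same bookkeeping, in a more demanding bivariate form, will recur when computing $\mathbb{E}X_iX_j$.
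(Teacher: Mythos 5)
Your computation is correct: the substitution $z=(w-\mu_i)/\sigma_i$, the antiderivatives $-\phi(z)$ for $z\phi(z)$ and $-z\phi(z)+\Phi(z)$ for $z^2\phi(z)$, and the symmetry identities $\phi(-t)=\phi(t)$, $1-\Phi(-t)=\Phi(t)$ all check out, and the cancellation $-\sigma_i\mu_i\phi(\mu_i/\sigma_i)+2\sigma_i\mu_i\phi(\mu_i/\sigma_i)=\mu_i\sigma_i\phi(\mu_i/\sigma_i)$ reproduces (\ref{EXi2_eqn}) exactly. The main difference from the paper is that the paper does not prove this theorem at all: it is quoted as a known result on the moments of a rectified Gaussian, attributed to Frey and Hinton (Equation (C.9) of that reference), and the appendix only proves the genuinely new cross-moment result, Theorem~\ref{thrm:xixj}. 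So your contribution is a self-contained elementary derivation of the cited formulas, which buys transparency and independence from the external reference, while the paper's citation-based route buys brevity and keeps the appendix focused on the novel bivariate case. Your closing remark is also apt: the same half-line bookkeeping does recur, in conditional-expectation form via the truncated bivariate moments of Kan and Robotti together with the law of total expectation, in the paper's proof of Theorem~\ref{thrm:xixj}.
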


Before stating our result for $\mathbb{E} X_i X_j$, we introduce some further notation. When $\sigma_i,\sigma_j>0$, write $$\rho_{ij} = \frac{\Sigma_{ij}}{\sigma_i \sigma_j}$$ 
for the correlation between $W_i$ and $W_j$ for all $i,j \in \{1,\hdots,p\}$, and define, for $|\rho|<1$, $\phi_2 (x; \rho)$ and $\Phi_2 (x; \rho)$ to be the pdf and cdf respectively of a bivariate Gaussian distribution with zero mean, unit variance and correlation $\rho$. If in addition $|\rho_{ij}|<1$, write 
\begin{equation}\label{omega_def}
\omega_{ij} = \frac{\mu_i\sigma_j - \rho_{ij} \mu_j\sigma_i}{\sigma_i\sigma_j\sqrt{1 - \rho_{ij}^2}}.
\end{equation}

\begin{theorem}
    Let $X$ be defined as above and suppose $\sigma_i,\sigma_j>0$ and $|\rho_{ij}|<1$. Then, for $i \neq j \in \{1,\ldots,p\}$,
    \begin{align}
    \mathbb{E} X_i X_j &= \mu_j \sigma_i \phi \left( \frac{\mu_i}{\sigma_i} \right) \Phi ( \omega_{ji} ) + \mu_i \sigma_j \phi \left( \frac{\mu_j}{\sigma_j} \right) \Phi ( \omega_{ij} )\nonumber\\
    &+ \sigma_i\sigma_j(1-\rho_{ij}^2) \phi_2 \left( \frac{\mu_i}{\sigma_i}, \frac{\mu_j}{\sigma_j}; \rho_{ij} \right)+ ( \mu_i \mu_j + \rho_{ij} \sigma_i \sigma_j ) \Phi_2 \left( \frac{\mu_i}{\sigma_i}, \frac{\mu_j}{\sigma_j}; \rho_{ij} \right).\label{EXiXj}
    \end{align}     
 
\label{thrm:xixj}
\end{theorem}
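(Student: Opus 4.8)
The plan is to reduce the claim to a small collection of Gaussian quadrant integrals and evaluate each by conditioning and integration by parts. First I would standardise: with $Z_i = (W_i - \mu_i)/\sigma_i$ and $Z_j = (W_j - \mu_j)/\sigma_j$, the pair $(Z_i, Z_j)$ is centred bivariate normal with unit variances and correlation $\rho := \rho_{ij}$, and $X_i = (\sigma_i Z_i + \mu_i)_+$, $X_j = (\sigma_j Z_j + \mu_j)_+$. Since both positive parts vanish outside the quadrant $Q = \{(z_i, z_j): z_i > -a_i,\ z_j > -a_j\}$, where $a_i = \mu_i/\sigma_i$ and $a_j = \mu_j/\sigma_j$,
\begin{equation*}
\mathbb{E} X_i X_j = \int_Q (\sigma_i z_i + \mu_i)(\sigma_j z_j + \mu_j)\,\phi_2(z_i, z_j; \rho)\,\mathrm{d}z_i\,\mathrm{d}z_j .
\end{equation*}
Expanding the product yields four integrals over $Q$, weighted by the monomials $z_i z_j$, $z_i$, $z_j$ and $1$ and carrying the scalar factors $\sigma_i\sigma_j$, $\mu_j\sigma_i$, $\mu_i\sigma_j$ and $\mu_i\mu_j$ respectively. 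By the $i \leftrightarrow j$ symmetry of both $Q$ and $\phi_2$ it suffices to evaluate the $z_i z_j$, the $z_i$ and the $1$ integrals; the $1$ integral is immediate, equalling $\mathbb{P}(Z_i > -a_i, Z_j > -a_j) = \Phi_2(a_i, a_j; \rho)$ by the symmetry of the centred bivariate normal.

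For the $z_i$ integral I would carry out the $z_j$-integration first via the conditional factorisation $\phi_2(z_i, z_j; \rho) = \phi(z_i)\,\phi\!\big((z_j - \rho z_i)/\sqrt{1-\rho^2}\big)/\sqrt{1-\rho^2}$, obtaining $\int_{-a_i}^\infty z_i\,\phi(z_i)\,\Phi\!\big((a_j + \rho z_i)/\sqrt{1-\rho^2}\big)\,\mathrm{d}z_i$, and then integrate by parts using $z\phi(z) = -\phi'(z)$. The boundary term produces $\phi(a_i)\,\Phi(\omega_{ji})$ after noting that $(a_j - \rho a_i)/\sqrt{1-\rho^2} = \omega_{ji}$, while the leftover integral equals $\rho$ times a one-dimensional ``slice'' of $\phi_2$ that integrates to $\phi(a_j)\,\Phi(\omega_{ij})$. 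The recurring algebraic fact here, obtained by completing the square in the exponent, is $\phi(a_i)\phi(\omega_{ji}) = \phi(a_j)\phi(\omega_{ij}) = \sqrt{1-\rho^2}\,\phi_2(a_i, a_j; \rho)$.

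The quadratic term $\int_Q z_i z_j\,\phi_2\,\mathrm{d}z_i\,\mathrm{d}z_j$ is where the real work lies. Again I would integrate over $z_j$ by conditioning, but now the inner integral is $\mathbb{E}\!\left[Z_j\,\mathbf{1}\{Z_j > -a_j\}\mid Z_i = z_i\right]$, which splits into a density contribution (a multiple of $\phi$) and a conditional-mean contribution (a multiple of $\Phi$). This leaves two outer integrals in $z_i$: one of the form $\int z_i\phi(z_i)\phi(\cdots)\,\mathrm{d}z_i$, handled by the slice identity above, and one of the form $\int z_i^2\phi(z_i)\Phi(\cdots)\,\mathrm{d}z_i$, handled via the identity $z^2\phi(z) = \phi(z) - (z\phi(z))'$ followed by a further integration by parts. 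The delicate point --- and the main obstacle --- is the bookkeeping: several powers of $1-\rho^2$ and of $\rho$ appear along the way, and one must verify that they collapse, the key cancellations being $(1-\rho^2)^2 + \rho^2(1-\rho^2) = 1-\rho^2$ and $-\rho(1-\rho^2) - \rho^3 = -\rho$. The upshot is $\int_Q z_i z_j\,\phi_2\,\mathrm{d}z_i\,\mathrm{d}z_j = (1-\rho^2)\,\phi_2(a_i, a_j; \rho) + \rho\,\Phi_2(a_i, a_j; \rho) - \rho\,a_i\phi(a_i)\Phi(\omega_{ji}) - \rho\,a_j\phi(a_j)\Phi(\omega_{ij})$.

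Finally I would reassemble the four pieces with their scalar factors, using $\sigma_i a_i = \mu_i$, $\sigma_j a_j = \mu_j$ and $\rho\sigma_i\sigma_j = \Sigma_{ij}$. The two $\rho$-weighted cross terms from the quadratic integral cancel exactly against the $\rho$-weighted contributions of the two linear integrals, leaving $\sigma_i\sigma_j(1-\rho^2)$ on $\phi_2$, $\mu_i\mu_j + \rho\sigma_i\sigma_j$ on $\Phi_2$, $\mu_j\sigma_i$ on $\phi(\mu_i/\sigma_i)\Phi(\omega_{ji})$ and $\mu_i\sigma_j$ on $\phi(\mu_j/\sigma_j)\Phi(\omega_{ij})$, which is precisely \eqref{EXiXj}. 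As an aside, one could instead specialise the general truncated-multivariate-normal moment identities of \cite{kan2017moments}, but the computation sketched above is self-contained and keeps the ReLU-specific structure transparent.
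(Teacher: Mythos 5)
Your proposal is correct, and I verified the key intermediate formulas: the linear quadrant integral indeed equals $\phi(a_i)\Phi(\omega_{ji})+\rho\,\phi(a_j)\Phi(\omega_{ij})$ with $a_i=\mu_i/\sigma_i$, the slice identity $\phi(a_i)\phi(\omega_{ji})=\phi(a_j)\phi(\omega_{ij})=\sqrt{1-\rho^2}\,\phi_2(a_i,a_j;\rho)$ holds, your expression for $\int_Q z_iz_j\phi_2$ is right (the stated cancellations of the $(1-\rho^2)$ and $\rho$ powers do occur), and the reassembly cancels the $\rho$-weighted cross terms exactly as you say, yielding \eqref{EXiXj}. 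However, your route is genuinely different from the paper's. The paper does not compute any quadrant integrals directly: it first proves Lemma~\ref{lem:conprob}, namely $\mathbb{P}(X_i>0\cap X_j>0)=\Phi_2(\mu_i/\sigma_i,\mu_j/\sigma_j;\rho_{ij})$, then quotes the truncated bivariate Gaussian conditional moment formula of \cite{kan2017moments} (stated there for unit variances, together with a prescribed rescaling rule for general $\sigma_i,\sigma_j$), sets the truncation points to zero, and multiplies by the quadrant probability via the law of total expectation, since $X_iX_j$ vanishes off that event. Your argument is in effect a self-contained re-derivation of that truncated-moment identity specialised to the ReLU setting: it is longer and requires careful bookkeeping, but it buys independence from the external reference (in particular from its unproved rescaling prescription) and makes the origin of each term in \eqref{EXiXj} transparent; the paper's proof buys brevity at the cost of leaning on the literature. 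Your closing aside correctly identifies the paper's actual strategy.
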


A proof of this result is given in Appendix~\ref{proofs}.

We note that the expressions in Theorems~\ref{thrm:xi} and~\ref{thrm:xixj} are closed-form functions of standard (1D and 2D) Gaussian integrals. This is in contrast to the expressions in~\cite{wright2024analytic} which take the form of infinite Taylor series.

It is straightforward to show that, in the special case of $\rho_{ij}=0$, (\ref{EXiXj}) reduces to
$$ \mathbb{E} X_i X_j=\left[\sigma_i \phi \left( \frac{\mu_i}{\sigma_i} \right)+\mu_i \Phi \left( \frac{\mu_i}{\sigma_i} \right)\right]\left[\sigma_j \phi \left( \frac{\mu_j}{\sigma_j} \right)+\mu_j \Phi \left( \frac{\mu_j}{\sigma_j} \right)\right].$$
This is nothing other than $\mathbb{E} X_i\,\mathbb{E} X_j$ by Theorem~\ref{thrm:xi}, which is expected in the case of zero correlation (independence).

The assumptions in Theorems \ref{thrm:xi} and \ref{thrm:xixj} ignore certain corner cases in which there is either zero uncertainty in the input variables or perfect correlation between the variables in the hidden layer. For completeness, we show in Appendix~\ref{rho1_proof} how the results in this section extend to these corner cases.

\section{Validation of the analytical expressions}\label{experiments}

In this section, we present evidence that the analytical approach to uncertainty propagation through a single-hidden layer MLP that was presented in Section~\ref{analytical_results} gives results which are consistent with a Monte Carlo sampling approach. 

We do this in the context of a test problem that has been used in the related publications~\cite{thompson2024analytical} and~\cite{thompson2025analytical}. This test problem concerns the estimation of the state-of-health (SOH) of lithium-ion cells by combining Electrical Impedance Spectroscopy (EIS) measurements with so-called \emph{equivalent circuit models}. Full background on this test problem can be found in~\cite{chan2022comparison}. 

A public dataset consisting of SOH and EIS measurements is available at [14]. As in~\cite{thompson2024analytical} and~\cite{thompson2025analytical}, the dataset was reduced to only include samples for which the SOH is above 75 \%, which resulted in 165 samples. An equivalent circuit model was used in~\cite{chan2022comparison} to reduce the raw EIS data to 12 coefficients. Following this approach, it was found in~\cite{thompson2024analytical} that SOH may be estimated using just two of these coefficients as input into a Gaussian Process. Here, we instead train an MLP with a single hidden layer on these two input variables. In~\cite{chan2022comparison}, uncertainties in the EIS raw data due to temperature fluctuations were propagated through the equivalent circuit model using Monte Carlo sampling. Based on this approach, uncertainties on the two input variables were modelled in~\cite{thompson2024analytical} using a multivariate Gaussian distribution. Full details on these data preprocessing steps may be found in~\cite{thompson2024analytical}.

Single hidden layer MLP models with 12 hidden units were trained using MATLAB's \texttt{fitrnet} function with default settings. For the MLP model, a random 75:25 train/test split is used, giving a training set size of 124 and a test set size of 41. The training set input data is centred and normalised, and the same transformation is then applied to the test set input data. Since it is only realistic to expect an ML model to interpolate the training data, one sample outside the range of SOH values in the training set was removed from the test set, leaving 40 samples.

For each sample in the test set, the mean and variance of the output of the trained MLP model is computed both by means of the analytical expressions from Section~\ref{analytical_results} and using Monte Carlo sampling with number of samples $n$ varied over $\{10^3,10^4,10^5,10^6\}$. We measure agreement between the analytical and Monte Carlo computations using RMSE, and we write $\kappa(\EE\,Y)$ and $\kappa(\textrm{Var}\,Y)$ for the RMSEs for the means and the variances respectively.

Table~\ref{accuracy_table} gives the RMSEs over the 40 test data points for the means and variances for the trained MLP model. We observe that the RMSE decreases with the number of Monte Carlo trials, and that it is below $10^{-5}$ for both quantities with $10^6$ trials. 

If the estimates from Monte Carlo sampling are converging to the true value, we would expect the error to be proportional to $1/\sqrt{n}$~\cite{caflisch1998monte}. In Figure~\ref{loglog} we display log-log plots of the RMSE against number of Monte Carlo trials $n$ for the estimates of both the mean and the variance. We observe a straight line indicating a power law relationship, and the gradient of the lines of best fit for $\kappa(\EE\,Y)$ and $\kappa(\textrm{Var}\,Y)$ are respectively -0.5036 and -0.4966, indicating proportionality to $1/\sqrt{n}$ in both cases.

\begin{figure}[h!]
     \centering
     \begin{subfigure}[b]{0.45\textwidth}
         \centering
         \includegraphics[width=\textwidth]{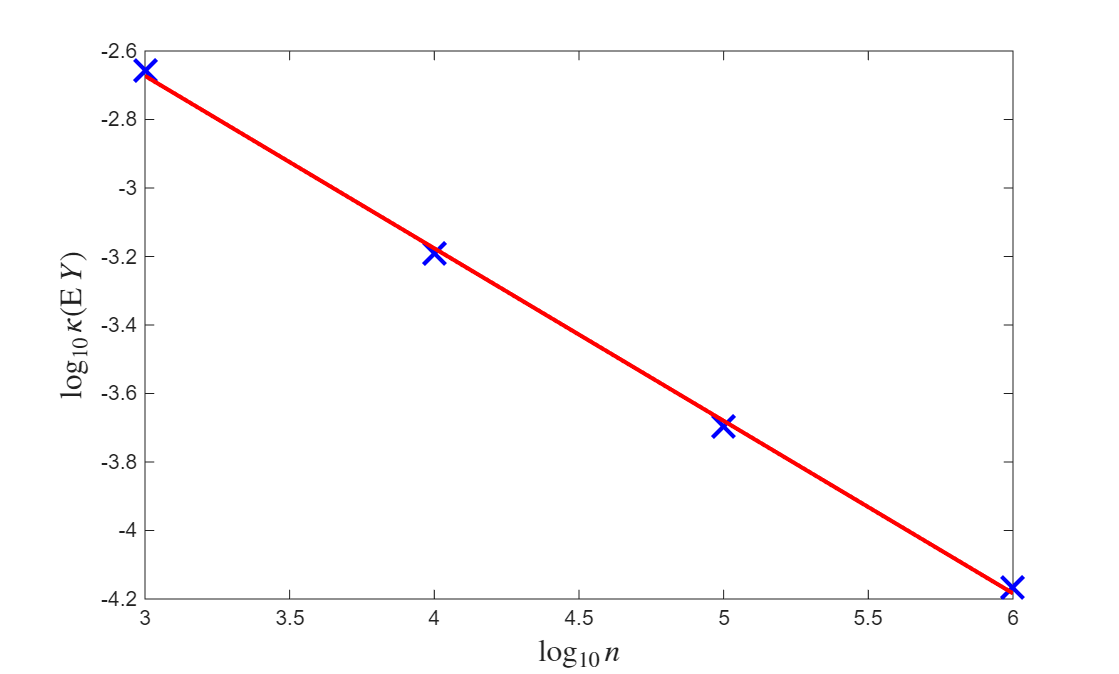}
         \caption{$\EE\,Y$.}
         \label{loglog_mu}
     \end{subfigure}
     \begin{subfigure}[b]{0.45\textwidth}
         \centering
         \includegraphics[width=\textwidth]{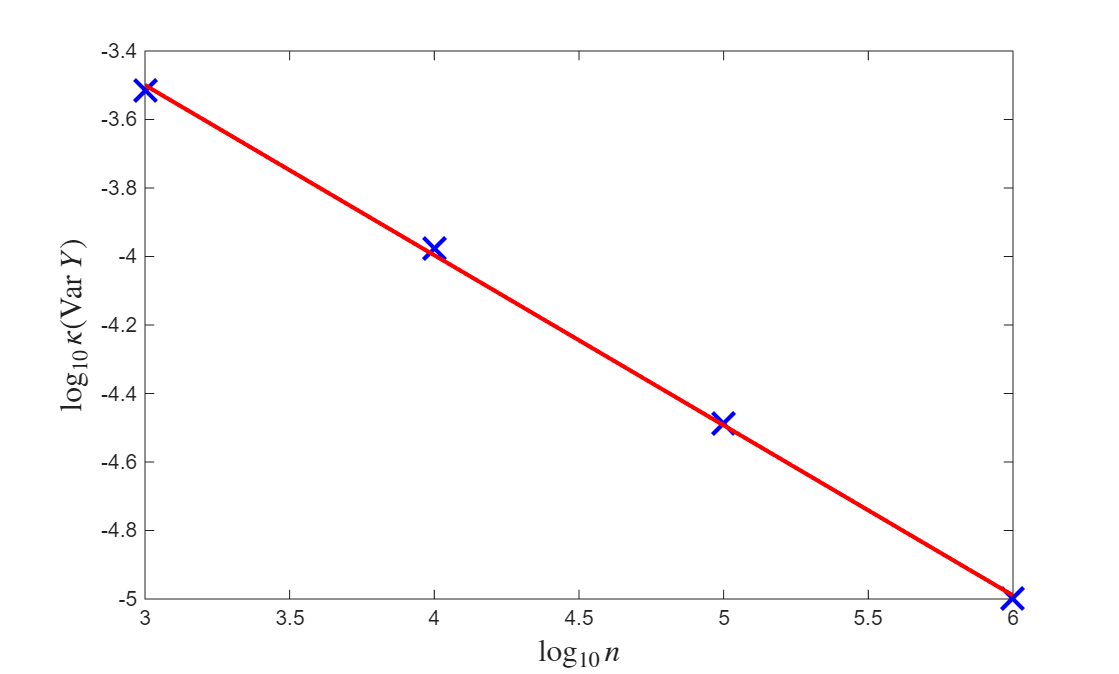}
         \caption{$\textrm{Var}\,Y$.}
         \label{loglog_var}
     \end{subfigure}
     \caption{Log-log plots of RMSE against number of Monte Carlo samples.}\label{loglog}
\end{figure}

These results provide strong evidence in a realistic setting that the analytical expressions are in agreement with Monte Carlo sampling.

\begin{table}[h!]
    \centering
    \begin{tabular}{|c|cc|}
    \hline
        Number of MC trials & $\kappa(\EE\,Y)\times 10^2$ &  $\kappa(\textrm{Var}\,Y)\times 10^3$\\
        \hline
        $10^3$ & 0.2200 & 0.3058\\
        $10^4$ & 0.0645 & 0.1054\\
        $10^5$ & 0.0201 & 0.0326\\
        $10^6$ & 0.0068 & 0.0100\\
    \hline
    \end{tabular}
    \caption{RMSE over the test set for varying numbers of Monte Carlo trials for multivariate Gaussian input variables.}
    \label{accuracy_table}
\end{table}

\section{Conclusions}\label{discussion}

We have presented analytical expressions for the mean and variance of the output of a trained MLP with ReLU activation functions in the case of multivariate Gaussian input. We have also validated the expressions through a comparison with Monte Carlo sampling. The use of analytical expressions in this context offers a solution which is more accurate, transparent and reproducible than a Monte Carlo sampling approach.

It would be possible to extend these results to other activation functions related to the ReLU, for example the Heaviside step function and the \emph{leaky ReLU}, which can be decomposed into the summation of a ReLU and a linear function. Extension to other activation functions that are less closely related to the ReLU, or to other input distributions, is likely to require different mathematical techniques.

Another interesting question would be to compare the computational complexity of our analytical approach for single-hidden-layer MLPs with sampling-based approaches, but this is left as future work.

Extension of the analytical results presented here to MLPs with more hidden layers would involve fully characterising the multivariate distribution after the second convolution in such a way that the moments after the second activation function could be derived, and this is a more challenging task. Alternatively, our results could be used in a multi-layer uncertainty propagation by making Gaussian approximations in each layer as in the approach in~\cite{wright2024analytic}, but this approach does not give an exact analytical characterisation of the output moments.  

\section*{Acknowledgment}

This work was supported by the UK Government’s Department for Science, Innovation and Technology (DSIT). The authors would like to thank Philip Aston (NPL) for reviewing a draft of the paper and providing useful feedback.

\appendix

\section{Proof of Theorem~\ref{thrm:xixj}}\label{proofs}

We first establish the following lemma.\\

\newtheorem{lemma}{Lemma}
\begin{lemma}
Let $X$ be defined as in Section~\ref{analytical_results}. Then 
\begin{equation}\label{truncated_prob}\mathbb{P}(X_i > 0 \cap X_j > 0) = \Phi_2 \left( \frac{\mu_i}{\sigma_i}, \frac{\mu_j}{\sigma_j}; \rho_{ij} \right).
\end{equation}
\label{lem:conprob}
\end{lemma}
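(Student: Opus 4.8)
The plan is to reduce the event in question to an event concerning the pre-activation vector $W$, and then to recognise the resulting probability as a value of the bivariate normal cdf after standardising and exploiting the central symmetry of the Gaussian.

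First I would observe that, since $X = W_+$ is obtained by applying a componentwise ReLU, we have $X_i > 0$ if and only if $W_i > 0$ (and likewise for $j$); hence $\mathbb{P}(X_i > 0 \cap X_j > 0) = \mathbb{P}(W_i > 0 \cap W_j > 0)$. Next, recall from Section~\ref{analytical_results} that $W$ is multivariate Gaussian with mean $\mu$ and covariance $\Sigma$, so the pair $(W_i, W_j)$ is bivariate Gaussian with means $\mu_i,\mu_j$, variances $\sigma_i^2,\sigma_j^2 > 0$ and correlation $\rho_{ij}$. Introduce the standardised variables $Z_i := (W_i - \mu_i)/\sigma_i$ and $Z_j := (W_j - \mu_j)/\sigma_j$, so that $(Z_i, Z_j)$ has the standard bivariate normal distribution with correlation $\rho_{ij}$ and density $\phi_2(\cdot,\cdot;\rho_{ij})$. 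Then
\[
\mathbb{P}(W_i > 0 \cap W_j > 0) = \mathbb{P}\left( Z_i > -\tfrac{\mu_i}{\sigma_i} \;\cap\; Z_j > -\tfrac{\mu_j}{\sigma_j} \right).
\]

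Finally I would use the fact that $(Z_i, Z_j)$ and $(-Z_i, -Z_j)$ have the same distribution, since negation preserves zero mean, unit variances and the correlation $\rho_{ij}$. This lets me replace the right tails by left tails: the probability above equals $\mathbb{P}\left( Z_i < \mu_i/\sigma_i \cap Z_j < \mu_j/\sigma_j \right)$, which by definition of $\Phi_2$ is exactly $\Phi_2\left( \mu_i/\sigma_i,\, \mu_j/\sigma_j;\, \rho_{ij} \right)$, as claimed.

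There is no substantial obstacle here; the only points requiring a little care are the equivalence $X_i>0 \iff W_i>0$ (trivial but worth stating, and implicitly using $\sigma_i>0$ so that the event is not degenerate), the standardisation bookkeeping, and the sign flip via central symmetry — the last being the one place where one might otherwise make an error with the direction of the inequalities. The hypotheses $\sigma_i,\sigma_j>0$ and $|\rho_{ij}|<1$ are exactly what is needed for $\phi_2$ and $\Phi_2$ to be well defined and for the standardisation to make sense.
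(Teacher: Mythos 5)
Your argument is correct and follows essentially the same route as the paper's proof of Lemma~\ref{lem:conprob}: reduce the event to $\{W_i>0\}\cap\{W_j>0\}$, standardise to a zero-mean, unit-variance bivariate Gaussian with correlation $\rho_{ij}$, and convert the upper-tail probability to the cdf value via the central symmetry of that distribution. No gaps to report.
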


\begin{proof}
   
We have $\mathbb{P}(X_i > 0 \cap X_j > 0) = \mathbb{P}(W_i > 0 \cap W_j > 0)$ where $W \sim \mathcal{N}(\mu, \Sigma)$. Defining $V_i = \frac{W_i-\mu_i}{\sigma_i}, V_j = \frac{W_j-\mu_j}{\sigma_j}$, we have
\begin{equation}\label{eqn:standard_normal}\left[ {V_i \atop V_j} \right] \sim \mathcal{N} \left( \left[ {0 \atop 0} \right], \left[ {1 \atop \rho_{ij}} {\rho_{ij} \atop 1} \right] \right).\end{equation}
We then have
\begin{eqnarray}\mathbb{P}(W_i > 0 \cap W_j > 0) &=& \mathbb{P}(\sigma_i V_i + \mu_i > 0 \cap \sigma_j V_j + \mu_j > 0)\nonumber\\&=& \mathbb{P} \left( V_i > - \frac{\mu_i}{\sigma_i} \cap V_j > - \frac{\mu_j}{\sigma_j} \right)\nonumber\\&=&\mathbb{P} \left( V_i < \frac{\mu_i}{\sigma_i} \cap V_j < \frac{\mu_j}{\sigma_j} \right),\label{eqn:lemma1}\end{eqnarray}
where the last line follows by the symmetry of $V_i$ and $V_j$ about $0$. The result now follows by combining (\ref{eqn:standard_normal}) and (\ref{eqn:lemma1}).
    
\end{proof}

\noindent \textbf{Proof of Theorem~\ref{thrm:xixj}:} In Section 2 of the Online Appendix of \cite{kan2017moments}, expressions for the moments of the truncated multivariate Gaussian distribution are given. In particular, an expression for $\mathbb{E} (X_i X_j | X_i > a_i \cap X_j > a_j )$ is given in the special case of $\sigma_i^2 = \sigma_j^2 = 1$, and it is explained how to transform the expression for the case of general $\sigma_i^2$ and $\sigma_j^2$. Setting $a_i = a_j = 0$,
\begin{equation}
\begin{split}
\mathbb{E}(X_i X_j | X_i > 0 \cap X_j > 0) & = \mu_i \mu_j + \rho_{ij}\\
& + \frac{\mu_j \phi(\eta_i) \Phi (\tilde{\omega}_{ji}) + \mu_i \phi (\eta_j) \Phi (\tilde{\omega}_{ij})}{\Phi_2 (\eta_i, \eta_j ; \rho_{ij})}\\
& + \frac{(1- \rho_{ij}^2) \phi_2 (\eta_i, \eta_j ; \rho_{ij})}{\Phi_2 ( \eta_i, \eta_j ; \rho_{ij})},
\end{split}
\label{kenrobotti}
\end{equation}
where $\eta_i = \mu_i - a_i$, $\eta_j = \mu_j - a_j$ and where 
$$\tilde{\omega}_{ij} = \frac{\mu_i - \rho_{ij} \mu_j}{\sqrt{1 - \rho_{ij}^2}}.$$
It is stated in~\cite{kan2017moments} that (\ref{kenrobotti}) may be extended to general $\sigma_i^2$ and $\sigma_j^2$ by substituting $\mu_i$ and $\mu_j$ with $\frac{\mu_i}{\sigma_i}$ and $\frac{\mu_j}{\sigma_j}$ respectively throughout and then multiplying the whole expression by $\sigma_i\sigma_j$. These substitutions mean that $\tilde{\omega}_{ij}$ and $\tilde{\omega}_{ji}$ should be replaced by $\omega_{ij}$ and $\omega_{ij}$ respectively, which were defined in (\ref{omega_def}). Applying these changes gives     
\begin{eqnarray}
\mathbb{E}(X_i X_j | X_i > 0 \cap X_j > 0) & =& \mu_i \mu_j + \rho_{ij} \sigma_i \sigma_j + \frac{\mu_j \sigma_i \phi \left( \frac{\mu_i}{\sigma_i} \right) \Phi (\omega_{ji}) + \mu_i \sigma_j \phi \left( \frac{\mu_j}{\sigma_j} \right) \Phi (\omega_{ij})}{\Phi_2 \left( \frac{\mu_i}{\sigma_i}, \frac{\mu_j}{\sigma_j} ; \rho_{ij} \right)}\nonumber\\
& + &\frac{\sigma_i \sigma_j(1-\rho_{ij}^2)\phi_2 \left( \frac{\mu_i}{\sigma_i}, \frac{\mu_j}{\sigma_j} ; \rho_{ij} \right)}{\Phi_2 \left( \frac{\mu_i}{\sigma_i}, \frac{\mu_j}{\sigma_j} ; \rho_{ij} \right)}.\label{truncated}
\end{eqnarray}
By the law of total expectation $\mathbb{E}X_i X_j = \mathbb{E}(X_i X_j | X_i > 0 \cap X_j > 0) \mathbb{P}(X_i > 0 \cap X_j > 0)$. Thus, we obtain the final result by multiplying (\ref{truncated}) by the expression in (\ref{truncated_prob}).
\hfill$\Box$

\section{Extension of analytical results to corner cases}\label{rho1_proof}

\subsection{The case of $\sigma_i=0$ or $\sigma_j=0$}

Theorem~\ref{thrm:xi} assumes that $\sigma_i>0$. In the case $\sigma_i=0$, we simply pass a deterministic $\mu_i$ through the ReLU function, giving 
$$\EE X_i=\left\{\begin{array}{ll}\mu_i&\mu_i>0\\
0&\mu_i\le 0,\end{array}\right.$$
and
$$\EE X_i^2=\left\{\begin{array}{ll}\mu_i^2&\mu_i>0\\
0&\mu_i\le 0.\end{array}\right.$$
We note that these results can also be obtained by taking the limit of the expressions in Theorem~\ref{thrm:xi} as $\sigma\ra 0$.

Similarly, when $\sigma_i=\sigma_j=0$, we have
$$\EE X_{ij}=\left\{\begin{array}{ll}\mu_i\mu_j&\mu_i>0\;\textrm{and}\;\mu_j>0\\
0&\textrm{otherwise}.\end{array}\right.$$

In the case $\sigma_i=0$ and $\sigma_j>0$, we have
$$\EE X_i X_j=\left\{\begin{array}{ll}\mu_i\cdot\EE X_j=\mu_i\left[\sigma_j \phi \left( \frac{\mu_j}{\sigma_j} \right)+\mu_j \Phi \left( \frac{\mu_j}{\sigma_j} \right)\right]&\mu_i>0\\0&\textrm{otherwise},\end{array}\right. $$
and by symmetry an analogous result holds for the case $\sigma_i>0$ and $\sigma_j=0$.

\subsection{The case of $|\rho_{ij}|=1$}

Theorem~\ref{thrm:xixj} assumes in addition that $|\rho_{ij}|<1$. We also derive a result for the limiting case of $|\rho_{ij}|=1$, for which we will require some preliminary lemmas. We first require a characterisation of the bivariate Gaussian distribution in the $|\rho_{ij}|=1$ limit.

\begin{lemma}[\textbf{\cite[Chapter IV, Theorem 16]{mood1950introduction}}]
\label{rho_pm1_lem}
Let $W$ be defined as in Section~\ref{analytical_results}. Then, in the $|\rho_{ij}|=1$ limit, $W_i$ and $W_j$ are linearly dependent in an affine sense (degenerate distribution), such that  
\begin{equation}\label{W_result}
W_j=\frac{\rho_{ij}\sigma_j}{\sigma_i}(W_i-\mu_i)+\mu_j.
\end{equation}
\end{lemma}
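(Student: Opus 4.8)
The plan is to verify directly that the affine combination
$$Z := W_j - \frac{\rho_{ij}\sigma_j}{\sigma_i}\,(W_i - \mu_i) - \mu_j$$
is almost surely zero when $|\rho_{ij}| = 1$; rearranging $Z = 0$ then yields (\ref{W_result}). Since $(W_i, W_j)$ is jointly Gaussian, $Z$ is a (possibly degenerate) univariate Gaussian random variable, so it is completely determined by its first two moments, and it suffices to show both vanish.

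First I would compute the mean. Using $\mathbb{E} W_i = \mu_i$ and $\mathbb{E} W_j = \mu_j$ gives immediately $\mathbb{E} Z = \mu_j - \frac{\rho_{ij}\sigma_j}{\sigma_i}(\mu_i - \mu_i) - \mu_j = 0$. Next I would compute the variance, using $\mathrm{Var}(W_i) = \sigma_i^2$, $\mathrm{Var}(W_j) = \sigma_j^2$ and $\mathrm{Cov}(W_i, W_j) = \rho_{ij}\sigma_i\sigma_j$:
$$\mathrm{Var}(Z) = \sigma_j^2 + \frac{\rho_{ij}^2\sigma_j^2}{\sigma_i^2}\,\sigma_i^2 - 2\,\frac{\rho_{ij}\sigma_j}{\sigma_i}\,\rho_{ij}\sigma_i\sigma_j = \sigma_j^2\bigl(1 - \rho_{ij}^2\bigr).$$
When $|\rho_{ij}| = 1$ the right-hand side is zero (and it tends to zero along any sequence $\rho_{ij} \to \pm 1$, which also covers the ``limit'' reading of the statement). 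A Gaussian random variable with zero variance equals its mean almost surely, so $Z = 0$ almost surely, i.e. $W_j = \frac{\rho_{ij}\sigma_j}{\sigma_i}(W_i - \mu_i) + \mu_j$ with probability one, establishing the degeneracy and the explicit affine form in a single step.

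I do not anticipate a substantive obstacle here, since the statement is classical. The only points that need a little care are: being explicit that the identity $\mathrm{Var}(Z) = \sigma_j^2(1 - \rho_{ij}^2)$ handles both the genuinely degenerate case $|\rho_{ij}| = 1$ and the limiting case $\rho_{ij} \to \pm 1$; checking that the sign of $\rho_{ij}$ propagates correctly, so that $\rho_{ij} = +1$ gives an increasing and $\rho_{ij} = -1$ a decreasing affine dependence; and, if one prefers a more structural argument, observing that the covariance matrix of $(W_i, W_j)$ has determinant $\sigma_i^2\sigma_j^2(1 - \rho_{ij}^2) = 0$ and hence rank one, whose null direction yields the same linear relation. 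The moment computation above is, however, the most self-contained route.
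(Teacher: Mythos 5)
Your argument is correct, but note that the paper itself does not prove this lemma at all: it is stated as a classical fact and attributed to Mood's textbook (Chapter IV, Theorem 16), so there is no internal proof to compare against. Your moment computation is the standard self-contained derivation: the residual $Z = W_j - \frac{\rho_{ij}\sigma_j}{\sigma_i}(W_i-\mu_i) - \mu_j$ is Gaussian as a linear combination of jointly Gaussian variables, has mean zero, and has variance $\sigma_j^2(1-\rho_{ij}^2)$, which vanishes when $|\rho_{ij}|=1$, so $Z=0$ almost surely and (\ref{W_result}) follows. This is sound, and it correctly covers both the genuinely degenerate distribution with $|\rho_{ij}|=1$ and the limiting reading $\rho_{ij}\to\pm 1$; your remark that the rank-one (zero-determinant) covariance matrix gives the same null direction is an equivalent structural view. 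The only implicit hypothesis worth flagging is $\sigma_i>0$ (you divide by $\sigma_i$), but this matches the standing assumptions of Section~\ref{analytical_results}, where the $\sigma_i=0$ corner case is treated separately. What your proof buys over the paper's citation is self-containedness; what the citation buys is brevity and deference to a standard reference for a classical result.
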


We next establish the following lemma.

\begin{lemma}\label{epsilon_lem}
Suppose $W_i\sim\mathrm{N}(\mu_i,\sigma_i^2)$, $Z\sim\mathrm{N}(0,1)$ independently of $W_i$. Given $\epsilon\geq 0$, define \begin{equation}\label{Wj_def}
W_j(\epsilon):=\frac{\rho_{ij}\sigma_j}{\sigma_i}(W_i-\mu_i)+\mu_j+\epsilon Z,
\end{equation}
where $|\rho_{ij}|=1$. Then, for $\epsilon>0$, define
\begin{equation}\label{muj_sigmaj_def}
\hat{\mu}_j(\epsilon):=\mathbb{E}\,W_j,\;\;\hat{\sigma}_j^2(\epsilon):=\mathrm{Var}\,W_j,
\end{equation}
\begin{equation}\label{Sigmaij_rhoij_def}
\hat{\Sigma}_{ij}(\epsilon):=\mathrm{Cov}\,(W_i,W_j),\;\;\hat{\rho}_{ij}(\epsilon):=\frac{\hat{\Sigma}_{ij}(\epsilon)}{\sigma_i\cdot\hat{\sigma}_j(\epsilon)},
\end{equation}
and
\begin{equation}\label{omegaij_def}
\hat{\omega}_{ij}(\epsilon):=\frac{\mu_i\hat{\sigma}_j(\epsilon) - \hat{\rho}_{ij}(\epsilon) \hat{\mu}_j(\epsilon)\sigma_i}{\sigma_i\cdot\hat{\sigma}_j(\epsilon)\sqrt{1 - \hat{\rho}_{ij}(\epsilon)^2}}.
\end{equation}
Then, for $\epsilon>0$, we have
\begin{equation}\label{lem_result}
\hat{\omega}_{ij}(\epsilon)=\frac{1}{\sigma_i\epsilon}(\mu_i\sigma_j-\rho_{ij}\mu_j\sigma_i)+\mathcal{O}(\epsilon).
\end{equation}
\end{lemma}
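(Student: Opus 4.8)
The plan is to prove the lemma by direct computation: evaluate each of $\hat\mu_j(\epsilon)$, $\hat\sigma_j^2(\epsilon)$, $\hat\Sigma_{ij}(\epsilon)$ and $\hat\rho_{ij}(\epsilon)$ in closed form, substitute them into the definition (\ref{omegaij_def}) of $\hat\omega_{ij}(\epsilon)$, and then read off the leading-order behaviour as $\epsilon\to 0^+$.

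First I would compute the first two moments of $W_j(\epsilon)$ from (\ref{Wj_def}) using the independence of $W_i$ and $Z$ together with $\EE W_i = \mu_i$, $\mathrm{Var}\,W_i = \sigma_i^2$, $\EE Z = 0$ and $\mathrm{Var}\,Z = 1$. This gives $\hat\mu_j(\epsilon) = \mu_j$, $\hat\sigma_j^2(\epsilon) = \rho_{ij}^2\sigma_j^2 + \epsilon^2 = \sigma_j^2 + \epsilon^2$ (using $\rho_{ij}^2 = 1$), and $\hat\Sigma_{ij}(\epsilon) = \mathrm{Cov}(W_i,W_j) = \rho_{ij}\sigma_i\sigma_j$. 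Hence $\hat\rho_{ij}(\epsilon) = \rho_{ij}\sigma_j/\sqrt{\sigma_j^2+\epsilon^2}$, which has modulus strictly less than one for every $\epsilon>0$, so that $\hat\omega_{ij}(\epsilon)$ is well-defined, and $1 - \hat\rho_{ij}(\epsilon)^2 = \epsilon^2/(\sigma_j^2+\epsilon^2)$, i.e. $\sqrt{1 - \hat\rho_{ij}(\epsilon)^2} = \epsilon/\sqrt{\sigma_j^2+\epsilon^2}$.

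Substituting these into (\ref{omegaij_def}), the denominator $\sigma_i\hat\sigma_j(\epsilon)\sqrt{1 - \hat\rho_{ij}(\epsilon)^2}$ collapses to $\sigma_i\epsilon$, and clearing the remaining radical from the numerator gives
$$\hat\omega_{ij}(\epsilon) = \frac{\mu_i(\sigma_j^2+\epsilon^2) - \rho_{ij}\mu_j\sigma_i\sigma_j}{\sigma_i\,\epsilon\,\sqrt{\sigma_j^2+\epsilon^2}}.$$
Factoring $\sigma_j$ out of the numerator, this equals $\frac{1}{\sigma_i\epsilon}\cdot\frac{\sigma_j}{\sqrt{\sigma_j^2+\epsilon^2}}\bigl[(\mu_i\sigma_j - \rho_{ij}\mu_j\sigma_i) + \mu_i\epsilon^2/\sigma_j\bigr]$. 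To finish I would control the factor $\sigma_j/\sqrt{\sigma_j^2+\epsilon^2}$: from $\sqrt{\sigma_j^2+\epsilon^2} - \sigma_j = \epsilon^2/(\sqrt{\sigma_j^2+\epsilon^2}+\sigma_j)$ it equals $1 + \mathcal{O}(\epsilon^2)$ for fixed $\sigma_j>0$, so the bracket times this factor is $(\mu_i\sigma_j - \rho_{ij}\mu_j\sigma_i) + \mathcal{O}(\epsilon^2)$; dividing by $\sigma_i\epsilon$ then yields exactly (\ref{lem_result}).

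There is no real obstacle here — the argument is essentially bookkeeping — but the point that needs care is handling the $\mathcal{O}(\cdot)$ terms honestly: one keeps $\mu_i,\mu_j,\sigma_i,\sigma_j,\rho_{ij}$ fixed while $\epsilon\to 0^+$, uses $\sigma_j>0$ to guarantee the expansions are valid (the only vanishing denominator is the explicit $1/\epsilon$), and checks that the $\mathcal{O}(\epsilon^2)$ remainder inside the bracket, once divided by $\sigma_i\epsilon$, contributes precisely the claimed $\mathcal{O}(\epsilon)$ error. I would also remark that the computation implicitly requires $\sigma_i,\sigma_j>0$; the degenerate cases where one of them vanishes are treated separately in Appendix~\ref{rho1_proof}.
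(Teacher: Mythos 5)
Your proposal is correct and follows essentially the same route as the paper: compute $\hat{\mu}_j(\epsilon)$, $\hat{\sigma}_j^2(\epsilon)$ and $\hat{\rho}_{ij}(\epsilon)$ via the independence of $W_i$ and $Z$, substitute into (\ref{omegaij_def}) so the denominator collapses to $\sigma_i\epsilon$, and expand the remaining expression in $\epsilon$ (your explicit bound on $\sigma_j/\sqrt{\sigma_j^2+\epsilon^2}$ is just a spelled-out version of the Taylor expansion the paper invokes). No gaps.
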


\noindent \textit{Proof:} With the definitions in (\ref{Wj_def}) and (\ref{muj_sigmaj_def}), we have 
\begin{equation}\label{Wj_moments}
\mathbb{E}\, W_j(\epsilon)=\mu_j,\;\;\mathrm{Var}\,W_j(\epsilon)=\sigma_j^2+\epsilon^2,
\end{equation}
and additionally with the definition in (\ref{Sigmaij_rhoij_def}) we then have\begin{eqnarray*}\mathbb{E}\left[W_i W_j(\epsilon)\right]&=&\mathbb{E}\left\{W_i\left[\displaystyle\frac{\rho_{ij}\sigma_j}{\sigma_i}(W_i-\mu_i)+\mu_j+\epsilon Z\right]\right\}\\
&=&\displaystyle\frac{\rho_{ij}\sigma_j}{\sigma_i}\left(\mathbb{E}\,W_i^2-\mu_i\mathbb{E}\,W_i\right)+\mu_j\mathbb{E}\,W_i\\
&=&\displaystyle\frac{\rho_{ij}\sigma_j}{\sigma_i}(\sigma_i^2+\mu_i^2-\mu_i^2)+\mu_i\mu_j\\
&=&\rho_{ij}\sigma_i\sigma_j+\mu_i\mu_j,\end{eqnarray*}
where in the second line we use the independence of $W_i$ and $Z$, and from which it follows that
\begin{equation}\label{rhoij}
\hat{\rho}_{ij}(\epsilon)=\frac{\rho_{ij}\sigma_j}{\sqrt{\sigma_j^2+\epsilon^2}}.
\end{equation}
Substituting (\ref{Wj_moments}) and (\ref{rhoij}) into (\ref{omegaij_def}) then gives, after a little rearrangement,
\begin{equation}\label{wij_result}\hat{\omega}_{ij}(\epsilon)=\frac{1}{\sigma_i\epsilon}\left(\mu_i\sqrt{\sigma_j^2+\epsilon^2}-\displaystyle\frac{\rho_{ij}\mu_j\sigma_i\sigma_j}{\sqrt{\sigma_j^2+\epsilon^2}}\right).
\end{equation}
Expanding the expression inside braces in (\ref{wij_result}) as a Taylor series in $\epsilon$ then yields (\ref{lem_result}).\hfill$\Box$\\
$ $

We will also need a lemma characterising the standard bivariate Gaussian cdf function $\Phi(x;\rho)$ in the limit as $\rho_{ij}\ra\pm1$.

\begin{lemma}
Let $\Phi(x)$ and $\Phi_2(x;\rho)$ be as defined in Section~\ref{analytical_results}. Then
\begin{equation}\label{Phi2_rho1}
\Phi_2\left( \frac{\mu_i}{\sigma_i}, \frac{\mu_j}{\sigma_j}; \rho_{ij} \right)\ra\Phi\left[\min\left(\frac{\mu_i}{\sigma_i},\frac{\mu_j}{\sigma_j}\right)\right]\;\textrm{as}\;\rho_{ij}\ra 1,
\end{equation}
and
\begin{equation}\label{Phi2_rho-1}
\Phi_2\left( \frac{\mu_i}{\sigma_i}, \frac{\mu_j}{\sigma_j}; \rho_{ij} \right)\ra\max\left[\Phi\left(\frac{\mu_i}{\sigma_i}\right)+\Phi\left(\frac{\mu_j}{\sigma_j}\right)-1,0\right]\;\textrm{as}\;\rho_{ij}\ra -1.\end{equation}
\end{lemma}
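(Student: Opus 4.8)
The plan is to represent $\Phi_2$ as a joint probability and pass to the limit using weak convergence. Write $a := \mu_i/\sigma_i$ and $b := \mu_j/\sigma_j$, and let $Z_1, Z_2$ be independent standard normal random variables. For $|\rho_{ij}| < 1$ the pair $(U, V) := \bigl(Z_1,\, \rho_{ij} Z_1 + \sqrt{1 - \rho_{ij}^2}\, Z_2\bigr)$ is bivariate Gaussian with zero mean, unit variances and correlation $\rho_{ij}$, so $\Phi_2(a, b; \rho_{ij}) = \mathbb{P}(U \le a \cap V \le b)$. The first step is to observe that $\mathbb{E}(V - Z_1)^2 = 2 - 2\rho_{ij} \to 0$ as $\rho_{ij} \to 1$, so that $V \to Z_1$ in probability and hence $(U, V)$ converges in distribution to $(Z_1, Z_1)$; likewise $\mathbb{E}(V + Z_1)^2 = 2 + 2\rho_{ij} \to 0$ as $\rho_{ij} \to -1$, so $(U, V)$ converges in distribution to $(Z_1, -Z_1)$. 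These degenerate limit laws are precisely the affine-dependence statement of Lemma~\ref{rho_pm1_lem}, read after standardising $W_i$ and $W_j$.

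Next I would apply the portmanteau theorem to the closed set $R := \{(u, v) : u \le a,\, v \le b\}$. Its topological boundary is contained in $\{u = a\} \cup \{v = b\}$, which pulls back under the law of $(Z_1, Z_1)$ to $\{Z_1 = a\} \cup \{Z_1 = b\}$ and under the law of $(Z_1, -Z_1)$ to $\{Z_1 = a\} \cup \{Z_1 = -b\}$; in either case this is a $\mathbb{P}$-null set because $Z_1$ has a continuous distribution. Therefore $\mathbb{P}\bigl((U, V) \in R\bigr)$ converges to the probability that the relevant limit pair lies in $R$.

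It then remains to evaluate these two limiting probabilities. As $\rho_{ij} \to 1$, $\mathbb{P}(Z_1 \le a \cap Z_1 \le b) = \mathbb{P}\bigl(Z_1 \le \min(a, b)\bigr) = \Phi\bigl(\min(a, b)\bigr)$, establishing~(\ref{Phi2_rho1}). As $\rho_{ij} \to -1$, $\mathbb{P}(Z_1 \le a \cap -Z_1 \le b) = \mathbb{P}(-b \le Z_1 \le a)$, which equals $\Phi(a) - \Phi(-b) = \Phi(a) + \Phi(b) - 1$ when $a \ge -b$ and equals $0$ otherwise (the interval being empty). Since $\Phi(a) + \Phi(b) - 1 \ge 0$ exactly when $a \ge -b$, these two cases combine into $\max\bigl(\Phi(a) + \Phi(b) - 1, 0\bigr)$, establishing~(\ref{Phi2_rho-1}).

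The only point needing a little care is the null-set check that licenses the portmanteau step; everything else is routine. If one prefers to avoid weak convergence, the same conclusions follow from a direct $\epsilon$--$\delta$ argument: bound $\bigl|\Phi_2(a, b; \rho_{ij}) - \mathbb{P}(U \le a \cap \pm U \le b)\bigr|$ by the probability of the symmetric difference of $\{V \le b\}$ and $\{\pm U \le b\}$, split that event according to whether $U$ lies within $\delta$ of the threshold, control the remaining part by Chebyshev's inequality applied to $V \mp U$ (whose variance tends to $0$), and let $\delta \to 0$ using continuity of $\Phi$. I would present the weak-convergence argument as the main line and note this alternative only briefly.
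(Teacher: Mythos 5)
Your proof is correct, and its core is the same as the paper's: identify the degenerate limit law of the standardised pair as $\rho_{ij}\to\pm 1$ (the pair collapsing onto $(Z,Z)$ or $(Z,-Z)$) and then evaluate $\mathbb{P}(Z\le \mu_i/\sigma_i \cap \pm Z\le \mu_j/\sigma_j)$, which yields exactly the $\Phi[\min(\cdot,\cdot)]$ and $\max[\Phi+\Phi-1,0]$ expressions. The difference is one of rigour rather than of route: the paper invokes Lemma~\ref{rho_pm1_lem} for the degenerate distribution and then passes the limit inside the probability without comment, whereas you justify that step explicitly via the coupling $(Z_1,\rho_{ij}Z_1+\sqrt{1-\rho_{ij}^2}\,Z_2)$, an $L^2$ bound ($2\mp 2\rho_{ij}\to 0$) giving convergence in distribution, and the portmanteau theorem with the null-boundary check $\{Z_1=\mu_i/\sigma_i\}\cup\{Z_1=\mp\mu_j/\sigma_j\}$. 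This buys a self-contained and airtight treatment of the limit interchange (including the boundary case $\mu_i/\sigma_i=-\mu_j/\sigma_j$ in \eqref{Phi2_rho-1}), at the cost of slightly more machinery than the paper's two-line argument; either version is acceptable.
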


\noindent \textit{Proof:} Substituting $\mu_i=\mu_j=0$ and $\sigma_i=\sigma_j=1$ into (\ref{W_result}), we have $W_j=W_i$ when $\rho_{ij}=1$ and $W_i=-W_j$ when $\rho_{ij}=-1$. We thus have
$$\lim_{\rho\ra 1}\Phi_2\left( \frac{\mu_i}{\sigma_i}, \frac{\mu_j}{\sigma_j}; \rho_{ij} \right)=\mathbb{P}\left(W_i\le \frac{\mu_i}{\sigma_i},W_i\le\frac{\mu_j}{\sigma_j}\right)=\Phi\left[\min\left(\frac{\mu_i}{\sigma_i},\frac{\mu_j}{\sigma_j}\right)\right]$$
and
$$\lim_{\rho\ra -1}\Phi_2\left( \frac{\mu_i}{\sigma_i}, \frac{\mu_j}{\sigma_j}; \rho_{ij} \right)=\mathbb{P}\left(-\frac{\mu_j}{\sigma_j}\le W_i\le \frac{\mu_i}{\sigma_i}\right)=\max\left[\Phi\left(\frac{\mu_i}{\sigma_i}\right)+\Phi\left(\frac{\mu_j}{\sigma_j}\right)-1,0\right],$$
which proves the result.\hfill $\Box$\\
$ $

Before proceeding to our result for $\EE X_i X_j$ when $|\rho_{ij}|=1$, we fix our definition of the Heaviside function $H(x)$~\cite{hoskins2009delta} to be
$$H(x):=\left\{\begin{array}{ll}1&x>0\\
\frac{1}{2}&x=0\\0&x<0.\end{array}\right.$$

\begin{theorem}\label{rho1}
Let $X$ be defined as above. If $\rho_{ij}=1$, then, for $i \neq j \in \{1,\ldots,p\}$,
\begin{align}\EE X_i X_j&=H(\mu_i\sigma_j-\mu_j\sigma_i)\mu_i \sigma_j \phi \left( \frac{\mu_j}{\sigma_j} \right)+[1-H(\mu_i\sigma_j-\mu_j\sigma_i)]\mu_j \sigma_i \phi \left( \frac{\mu_i}{\sigma_i}\right)\nonumber\\&+( \mu_i \mu_j + \sigma_i \sigma_j )\Phi\left[\min\left(\frac{\mu_i}{\sigma_i},\frac{\mu_j}{\sigma_j}\right)\right].\label{rho_plus1}\end{align}
Meanwhile if $\rho_{ij}=-1$, then, for $i \neq j \in \{1,\ldots,p\}$,
\begin{align}\EE X_i X_j&=H(\mu_i\sigma_j+\mu_j\sigma_i)\left[\mu_i \sigma_j \phi \left( \frac{\mu_j}{\sigma_j}\right) +\mu_j \sigma_i \phi \left( \frac{\mu_i}{\sigma_i}\right)\right]\nonumber\\&+( \mu_i \mu_j - \sigma_i \sigma_j )\max\left[\Phi\left(\frac{\mu_i}{\sigma_i}\right)+\Phi\left(\frac{\mu_j}{\sigma_j}\right)-1,0\right].\label{rho_minus1}\end{align}
\end{theorem}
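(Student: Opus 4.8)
The plan is to derive the degenerate formulae as limits of the non-degenerate expression in Theorem~\ref{thrm:xixj}, exploiting the perturbation device of Lemma~\ref{epsilon_lem}. Fix $\rho_{ij}\in\{-1,1\}$, let $Z\sim\mathrm{N}(0,1)$ be independent of $W_i$, and for $\epsilon>0$ let $W_j(\epsilon)$, $\hat\mu_j(\epsilon)$, $\hat\sigma_j(\epsilon)$, $\hat\rho_{ij}(\epsilon)$ and $\hat\omega_{ij}(\epsilon)$ be as in Lemma~\ref{epsilon_lem}. Then $(W_i,W_j(\epsilon))$ is jointly Gaussian, and by (\ref{Wj_moments}) and (\ref{rhoij}) it has $\hat\mu_j(\epsilon)=\mu_j$, $\hat\sigma_j(\epsilon)=\sqrt{\sigma_j^2+\epsilon^2}>0$ and $\hat\rho_{ij}(\epsilon)=\rho_{ij}\sigma_j/\sqrt{\sigma_j^2+\epsilon^2}$, which has modulus strictly less than $1$. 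Hence Theorem~\ref{thrm:xixj} applies to $(W_i,W_j(\epsilon))$ for every $\epsilon>0$, giving a closed-form expression for $\mathbb{E}\,X_iX_j(\epsilon)$, where $X_j(\epsilon):=W_j(\epsilon)_+$, as the right-hand side of (\ref{EXiXj}) with $\mu_j,\sigma_j,\rho_{ij}$ replaced by their hatted counterparts and $\omega_{ij},\omega_{ji}$ by $\hat\omega_{ij}(\epsilon),\hat\omega_{ji}(\epsilon)$ (the $\mu_i,\sigma_i$ factors being unchanged, since only the $j$-coordinate is perturbed).

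I would then pass to the limit $\epsilon\to0^+$ on both sides. On the left, Lemma~\ref{rho_pm1_lem} identifies the degenerate variable $W_j$ of the theorem with $W_j(0)$; since $W_j(\epsilon)\to W_j$ almost surely and $\mathrm{ReLU}$ is continuous, $X_iX_j(\epsilon)\to X_iX_j$ almost surely, and for $\epsilon\in(0,1]$ we have the uniform bound $|X_iX_j(\epsilon)|\le|W_i|\bigl(\tfrac{\sigma_j}{\sigma_i}(|W_i|+|\mu_i|)+|\mu_j|+|Z|\bigr)$, which is integrable and independent of $\epsilon$; dominated convergence therefore gives $\mathbb{E}\,X_iX_j(\epsilon)\to\mathbb{E}\,X_iX_j$. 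On the right, I would take the four terms of (\ref{EXiXj}) in turn. The prefactors converge, $\mu_j\sigma_i\phi(\mu_i/\sigma_i)$ being unchanged and $\mu_i\hat\sigma_j(\epsilon)\phi(\mu_j/\hat\sigma_j(\epsilon))\to\mu_i\sigma_j\phi(\mu_j/\sigma_j)$; a short direct computation parallel to that in Lemma~\ref{epsilon_lem} gives $\hat\omega_{ji}(\epsilon)=(\mu_j\sigma_i-\rho_{ij}\mu_i\sigma_j)/(\sigma_i\epsilon)$, so that, together with (\ref{lem_result}) and the facts $\Phi(+\infty)=1$, $\Phi(-\infty)=0$ and $\Phi(0)=\tfrac12=H(0)$, we obtain $\Phi(\hat\omega_{ij}(\epsilon))\to H(\mu_i\sigma_j-\rho_{ij}\mu_j\sigma_i)$ and $\Phi(\hat\omega_{ji}(\epsilon))\to H(\mu_j\sigma_i-\rho_{ij}\mu_i\sigma_j)$. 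The $\phi_2$ term disappears: using $\phi_2(\cdot,\cdot;\rho)\le\tfrac1{2\pi\sqrt{1-\rho^2}}$, its coefficient $\sigma_i\hat\sigma_j(\epsilon)(1-\hat\rho_{ij}(\epsilon)^2)$ times $\phi_2$ is at most $\tfrac1{2\pi}\sigma_i\hat\sigma_j(\epsilon)\sqrt{1-\hat\rho_{ij}(\epsilon)^2}=\sigma_i\epsilon/(2\pi)\to0$. Finally $\mu_i\hat\mu_j(\epsilon)+\hat\rho_{ij}(\epsilon)\sigma_i\hat\sigma_j(\epsilon)=\mu_i\mu_j+\rho_{ij}\sigma_i\sigma_j$ exactly, and for the factor $\Phi_2(\mu_i/\sigma_i,\mu_j/\hat\sigma_j(\epsilon);\hat\rho_{ij}(\epsilon))$ I would first replace $\mu_j/\hat\sigma_j(\epsilon)$ by $\mu_j/\sigma_j$ — the change is bounded by $|\Phi(\mu_j/\hat\sigma_j(\epsilon))-\Phi(\mu_j/\sigma_j)|\to0$, since increasing the second argument of a bivariate normal cdf raises it by no more than the corresponding increase in the marginal cdf — and then invoke the preceding lemma on the $\rho\to\pm1$ limit of $\Phi_2$, obtaining $\Phi[\min(\mu_i/\sigma_i,\mu_j/\sigma_j)]$ when $\rho_{ij}=1$ and $\max[\Phi(\mu_i/\sigma_i)+\Phi(\mu_j/\sigma_j)-1,0]$ when $\rho_{ij}=-1$.

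Assembling these limits completes the proof: for $\rho_{ij}=1$, using $H(-x)=1-H(x)$ to rewrite $H(\mu_j\sigma_i-\mu_i\sigma_j)=1-H(\mu_i\sigma_j-\mu_j\sigma_i)$, the two prefactor terms combine into the first line of (\ref{rho_plus1}) while the last term becomes $(\mu_i\mu_j+\sigma_i\sigma_j)\Phi[\min(\mu_i/\sigma_i,\mu_j/\sigma_j)]$; for $\rho_{ij}=-1$ the two Heaviside arguments coincide (both equal $\mu_i\sigma_j+\mu_j\sigma_i$), so the prefactor terms combine as in the first line of (\ref{rho_minus1}) and the last term becomes $(\mu_i\mu_j-\sigma_i\sigma_j)\max[\Phi(\mu_i/\sigma_i)+\Phi(\mu_j/\sigma_j)-1,0]$. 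I expect the main obstacle to be the interchange of limit and expectation on the left-hand side: the formula of Theorem~\ref{thrm:xixj} is valid only away from the degenerate set $|\rho_{ij}|=1$, so one genuinely has to approach that set from within and control the expectation in the limit, whereas once the dominated-convergence step is secured the remaining work is routine bookkeeping with the limits catalogued above.
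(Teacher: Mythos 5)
Your proposal is correct and follows essentially the same route as the paper: the degenerate case is obtained as the $\epsilon\to 0^+$ limit of Theorem~\ref{thrm:xixj} applied to the perturbed variable $W_j(\epsilon)$ of Lemma~\ref{epsilon_lem}, combined with the lemma on the $\rho\to\pm1$ limit of $\Phi_2$. The only difference is that you make explicit several steps the paper leaves implicit — the dominated-convergence justification that $\EE\,X_iX_j(\epsilon)\to\EE\,X_iX_j$, the exact computation of $\hat{\omega}_{ji}(\epsilon)$, and the vanishing of the $\phi_2$ term — which strengthens rather than alters the argument.
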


\noindent \textit{Proof:} By Lemma~\ref{rho_pm1_lem}, (\ref{W_result}) holds in the $|\rho_{ij}|=1$ limit. We can determine the limiting behaviour of $\omega_{ij}$ by considering instead $W_j(\epsilon)$ as defined in (\ref{Wj_def}). It then follows by Lemma~\ref{epsilon_lem} that 
$$\hat{\omega}_{ij}(\epsilon)=\frac{1}{\sigma_i\epsilon}(\mu_i\sigma_j-\rho_{ij}\mu_j\sigma_i)+\mathcal{O}(\epsilon),$$
from which it follows that
\begin{equation}\label{omega_lim}\lim_{\epsilon\ra 0}\Phi[\hat{\omega}_{ij}(\epsilon)]=\left\{\begin{array}{ll}1&\mu_i\sigma_j-\rho_{ij}\mu_j\sigma_i>0\\\frac{1}{2}&\mu_i\sigma_j-\rho_{ij}\mu_j\sigma_i=0\\0&\mu_i\sigma_j-\rho_{ij}\mu_j\sigma_i<0.\end{array}\right.\end{equation}
Substituting $\rho_{ij}=\pm 1$, (\ref{omega_lim}), (\ref{Phi2_rho1}) and (\ref{Phi2_rho-1}) into (\ref{EXiXj}) now gives the required result.\hfill$\Box$\\
$ $

We remark that (\ref{rho_plus1}) reduces to (\ref{EXi2_eqn}) in the case $\mu_i=\mu_j$ and $\sigma_i=\sigma_j$. This is expected, since in this case $X_i=X_j$.

\bibliography{refs}
\bibliographystyle{plain}

\end{document}